\documentclass[runningheads]{llncs}

\usepackage[normalem]{ulem}
\usepackage[greek,english]{babel}
\usepackage{amsmath}
\usepackage{amsfonts}

\usepackage{flushend}
\usepackage{xcolor}
\usepackage[pdftex]{graphicx}
\usepackage{float}
\usepackage{caption, subcaption}
\usepackage[export]{adjustbox}
\usepackage{wrapfig}

\usepackage{amssymb}% http://ctan.org/pkg/amssymb
\usepackage{pifont}% http://ctan.org/pkg/pifont
\usepackage{multirow}
\usepackage{multicol}
\usepackage{arydshln}
\usepackage[edges]{forest}
\usepackage{nccmath}
\usepackage{booktabs}
\usepackage{caption}
\usepackage{tabularx}
\usepackage{booktabs}
\usepackage{caption}
\usepackage{geometry}
\usepackage{tabularray}
\usepackage{bbm}
\usepackage{dsfont}

\newcommand{\eat}[1]{}

\usepackage[official]{eurosym}
\usepackage{enumitem}
\setlist{nolistsep,noitemsep}
\usepackage{hyperref}

\begin{document}

\title{From Log-Odds to Shapley Values: \\An Explanatory Geometry for the Weighted Naive Bayes Classifier}

\titlerunning{From Log-Odds to Shapley Values}

\author{Vincent Lemaire\inst{1} \and
Fabrice Clérot\inst{2} 
}

\authorrunning{}

\institute{
Orange Research, France, \email{vincent.lemaire@orange.com}\and
Orange Research, France, \email{fabrice.clerot@orange.com}
}

\maketitle

\begin{abstract}
This paper studies the construction of an explanatory space for a weighted naive Bayes classifier from the supervised representation induced by the model. We start from the classical supervised distance based on conditional log-likelihoods and introduce a discriminative reformulation based on log-odds, which is more directly related to the classification decision. We then show that this representation induces a distance that exactly coincides with the $\ell_1$ distance between vectors of analytical Shapley values, thereby providing a formal explanatory interpretation of the geometry induced by the model. Finally, we empirically compare several supervised distances derived from these representations using a $k$-nearest neighbors classifier. This work highlights a close link between supervised distance, local explanation, and predictive behavior, from a primarily methodological perspective.
\end{abstract}

\section{Context: Supervised distance induced by a classifier}
\label{distance}

\subsection{Introduction}

%Dans le cadre d’un classifieur probabiliste, il peut être utile de disposer d’une représentation des observations cohérente avec la manière dont le modèle produit ses scores de prédiction. Une telle représentation permet de définir une distance entre individus qui ne repose pas uniquement sur la géométrie brute des variables d’entrée, mais sur leur rôle dans la décision du classifieur. Dans le cas du classifieur naïf de Bayes cette idée conduit naturellement à construire une représentation supervisée fondée sur les probabilités conditionnelles associées aux variables explicatives. Deux observations proches dans cette représentation ont alors vocation à être proches également du point de vue de leur comportement prédictif. Cette section rappelle le principe de cette représentation induite par le classifieur, ainsi que la distance qui lui est associée. Cette construction servira ensuite de point de départ pour introduire une représentation plus directement discriminante fondée sur les log-odds en Section \ref{logodds}, puis pour établir un lien avec les valeurs de Shapley analytiques du naïf de Bayes en Sections \ref{sec-shap} et \ref{sec-multi}. Enfin, avant de conclure, une validation expérimentale avec un classifieur  fondé sur l'algorithme des $k$ plus proches voisins ($k$-NN) est présentée en Section~\ref{expe}.

In the context of a probabilistic classifier, it can be useful to have a
representation of the observations that is consistent with how the model generates its prediction scores. Such a representation allows us to define a distance between
individuals that is based not only on the raw geometry of the input variables,
but also on their role in the classifier’s decision. In the case of the naive Bayes classifier, this idea naturally leads to constructing a supervised representation based on the conditional probabilities associated with the explanatory variables. Two observations that are close in this representation are then expected to be close in terms of their predictive behavior as well. This section reviews the principle of this classifier-induced representation, as well as the distance associated with it. This construction will then serve as a starting point for introducing a more directly discriminative representation based on log-odds in Section \ref{logodds}, and subsequently for establishing a connection with the analytical Shapley values of the naive Bayes classifier in Sections \ref{sec-shap} and \ref{sec-multi}. Finally, before concluding, an experimental validation using a classifier  based on the $k$-nearest-neighbor ($k$-NN) algorithm is presented in Section~\ref{expe}.

%\subsection{Prétraitement}
\subsection{Preprocessing}
\label{Preprocessing}
% Le classifieur naïf de Bayes utilisé dans la librairie Khiops \cite{khiops} repose sur une étape préalable de préparation des variables explicatives. Les variables numériques sont discrétisées, tandis que les variables catégorielles font l’objet d’un groupement de modalités. À l’issue de ce prétraitement, chaque variable $m$ est recodée en une variable discrète comportant $I_m$ valeurs possibles. Une observation \(D\) peut ainsi être représentée sous la forme
% $$
% \small
% D = (D_{1 i_1}, D_{2 i_2}, \dots, D_{M i_M}),
% $$
% où \(D_{m i_m}\) désigne la valeur recodée prise par la variable \(m\) pour cette
% observation. Dans la suite de l'article par souci de simplification de lecture $\small
% D = (D_{1 i_1}, D_{2 i_2}, \dots, D_{M i_M}),
% $ sera noté $
% D = (D_{1}, D_{2}, \dots, D_{M})
% $.

The naive Bayes classifier used in the Khiops library \cite{khiops} relies on a preliminary step to prepare the explanatory variables. Numeric variables are discretized, while categorical variables are grouped by category. Following this preprocessing, each variable $m$ is recoded into a discrete variable with $I_m$ possible values. An observation \(D\) can thus be represented as
$$
D = (D_{1 i_1}, D_{2 i_2}, \dots, D_{M i_M}),
$$
where \(D_{m i_m}\) denotes the recoded value taken by the variable \(m\) for this
observation. For the sake of simplicity in the remainder of this article, $D = (D_{1 i_1}, D_{2 i_2}, \dots, D_{M i_M}),
$ will be denoted by $
D = (D_{1}, D_{2}, \dots, D_{M})
$.

%Ce prétraitement permet d’estimer, pour chaque variable recodée et pour chaque classe cible $(C_j)$, les probabilités conditionnelles $p(D_{m}|C_j)$, qui constituent les briques élémentaires du score du classifieur naïf de Bayes. Il fournit ainsi une représentation homogène des variables, adaptée à la définition d’une distance supervisée cohérente avec le modèle.

This preprocessing enables us to estimate the conditional probabilities $p(D_m|C_j)$ for each recoded variable and target class $C_j$. These probabilities form the basis of the naive Bayes classifier’s score. This provides a homogeneous representation of the variables, suitable for defining a supervised distance consistent with the model.

%Autrement dit, la description d’une observation ne se limite plus à ses valeurs brutes : elle peut être exprimée à travers les quantités probabilistes utilisées par le classifieur pour produire sa décision. C’est cette représentation induite par le modèle qui sera utilisée dans la suite pour définir une distance entre observations.

In other words, an observation can be described not only by its raw values, but also by the probabilistic quantities used by the classifier to make a decision. This model-induced representation will be used to define a distance between observations in what follows.

%\subsection{Distance dépendante des classes cibles}
\subsection{Distance depending on target classes}
\label{calculdistance}

% Lemaire et al. \cite{Lemaire2015} ont proposé de définir une distance supervisée induite par un classifieur ici le classifieur naïf de Bayes; distance cohérente avec la manière dont le classifieur calcule ses scores de classification. En reprenant les notations introduites Section 1.2 on ré-écrit, pour chaque classe cible $C_j$, le score logarithmique du classifieur naïf de Bayes sous la forme
% \begin{equation}
% \small
% \log p(C_j \mid D)=\sum_{m=1}^M \log p(D_{m}\mid C_j)+\log p(C_j)-\log p(D),
% \label{eq:nb_logposterior}
% \end{equation}
% où \(D=(D_m)_{m=1,\dots,M}\) désigne une instance et \(D_{m}\) la valeur recodée
% de la variable \(m\).

Lemaire et al. \cite{Lemaire2015} proposed defining a supervised distance induced by a classifier—in this case, the Naive Bayes classifier—that is consistent with how the classifier computes its classification scores. Using the notation introduced in Section~\ref{Preprocessing}, we can rewrite, for each target class $C_j$, the logarithmic score of the naive Bayes classifier in the form
\begin{equation}
\log p(C_j \mid D)=\sum_{m=1}^M \log p(D_{m}\mid C_j)+\log p(C_j)-\log p(D),
\label{eq:nb_logposterior}
\end{equation}
where $D=(D_m)_{m=1,\dots,M}$ denotes an instance and $D_{m}$ the recoded value
of variable $m$.

% La décision bayésienne consiste à choisir la classe $C_j$ maximisant cette quantité.
% Lemaire et al. \cite{Lemaire2015} définissent alors la distance entre
% deux instances $D$ et $D'$ par

The Bayesian decision is to choose the class $C_j$ that maximizes this quantity.
Lemaire et al. \cite{Lemaire2015} then define the distance between
two instances $D$ and $D'$ as
\begin{equation}
d_{NB}^{1}(D,D')
=
\sum_{m=1}^M \sum_{j=1}^J
\left|
\log p(D_{m}\mid C_j)
-
\log p(D'_{m}\mid C_j)
\right|.
\label{eq:d1nb_original}
\end{equation}

%Chaque instance est ainsi représentée par un vecteur dans un espace supervisé de dimension $MJ$, dont les composantes sont les quantités $\log p(D_m \mid C_j)$, indexées par les couples $(m,j)$ avec $m \in \{1,\ldots,M\}$ et $j \in \{1,\ldots,J\}$. La distance proposée correspond à la norme $\ell_1$ entre ces vecteurs. La distance entre deux instances est donc définie en fonction des recodages. Deux instances proches au sens de leur recodage supervisé seront proches au sens de leur comportement pour la classe à prédire. En effet si on définit la distance entre les distributions de classes prédites de la façon suivante :

Each instance is thus represented by a vector in a supervised space of dimension $MJ$, whose components are the quantities $\log p(D_m \mid C_j)$, indexed by the pairs $(m,j)$ where $m \in \{1,\ldots,M\}$ and $j \in \{1,\ldots,J\}$. The proposed distance corresponds to the $\ell_1$ norm between these vectors. The distance between two instances is therefore defined in terms of their supervised recodings. Instances that are close in terms of their supervised recodings will exhibit similar behaviour with respect to the predicted class. Indeed, if we define the distance between the predicted class distributions as follows:
\begin{align}
\Delta^1(D,D')=\sum_{j=1}^{J} \left| log(p(C_j|D))-log(p(C_j|D')) \right| 
\end{align}

Thus we have the following majorization \cite{Lemaire2015}
%On a la majoration suivante:
\begin{align}\label{majoration_nb}
\Delta^1(D,D') \leq \left[ d_{NB}^{1}(D,D') + J \left| log(p(D))-log(p(D')) \right| \right]
%\mbox{avec }  p(D)= \prod_m p(D_{mi_m}) = \prod_m  \sum_j p(D_{mi_m}|C_j) p(C_j) \nonumber
\end{align}

%Cette inégalité montre que la distance $d_1^{\mathrm{NB}}$ contrôle les écarts entre log-probabilités a posteriori à un terme de normalisation près. Ainsi, lorsque deux observations sont proches au sens de $d_1^{\mathrm{NB}}$ et que leurs termes de normalisation $\log p(D)$ et $\log p(D')$ sont également proches, leurs vecteurs de scores prédictifs sont proches au sens de $\Delta_1$. Dans le cadre du modèle naïf de Bayes génératif, le terme $\log p(D)$ est lui-même déterminé par les scores de classes via une opération de type \emph{log-sum-exp}. Il est donc lié aux log-vraisemblances conditionnelles utilisées dans la représentation, même si l'équation~(\ref{majoration_nb}) isole explicitement sa contribution.

This inequality shows that the distance $d_1^{\mathrm{NB}}$ controls the differences between posterior log-probabilities up to a normalization term. Thus, when two observations are close in the sense of $d_1^{\mathrm{NB}}$ and their normalization terms $\log p(D)$ et $\log p(D')$ are also close, their predictive score vectors are close in the sense of $\Delta_1$. In the context of the generative naive Bayes model, the term $\log p(D)$ is itself determined by the class scores via a \emph{log-sum-exp} operation. Therefore, despite equation (\ref{majoration_nb}) explicitly isolating its contribution, it is related to the conditional log-likelihoods used in the representation.

%%%%%%%%%%%%%%%%%%%%%%

%Dans le cas d'un classifieur naïf de Bayes `pondéré' les variables reçoivent des poids tel que l'équation \ref{eq:d1nb_original} devient :

In the case of a `weighted' Naive Bayes classifier, the variables are assigned weights such that the original equation \ref{eq:d1nb_original} becomes:
\begin{equation}
d_{SNB}^{1}(D,D')
=
\sum_{m=1}^M \sum_{j=1}^J
w_m
\left|
\log p(D_{m}\mid C_j)
-
\log p(D'_{m}\mid C_j)
\right|
\label{eq:d1nb_weighted}
\end{equation}

%où $w_m$ est le poids de la variable $m$, identique pour toutes les classes cibles. Chaque instance est alors représentée par un vecteur de dimension $MJ$, dont les composantes, indexées par les couples $(m,j)$, sont pondérées par le poids $w_m$ associé à la variable $m$. La distance (Équation~\ref{majoration_nb}), où $d_{SNB}^{1}$ est remplacée par $d_{SNB}^{1}$, constitue alors une version pondérée de la distance supervisée initiale, dans laquelle l'influence relative de chaque variable est modulée par les poids issus du classifieur.

where $w_m$ is the weight of variable $m$, which is the same for all target classes. Each instance is then represented by an $MJ$-dimensional vector, whose components—indexed by the pairs $(m,j)$—are weighted by the weight $w_m$ associated with variable $m$. The distance (Equation~\ref{majoration_nb}), where $d_{SNB}^{1}$ is replaced by $d_{SNB}^{1}$, is thus a weighted version of the initial supervised distance, in which the relative influence of each variable is modulated by the weights derived from the classifier.

%\section{Proposition d'une distance supervisée discriminante fondée sur les log-odds}
\section{Proposal for a log-odds-based supervised discriminant distance}
\label{logodds}

\subsection{Motivation}

%Dans \cite{Lemaire2015}, les auteurs définissent une représentation supervisée induite par le classifieur naïf de Bayes, telle que présentée dans la section précédente. La majoration donnée à l'équation~\ref{majoration_nb} relie la proximité de deux observations au sens de cette représentation aux écarts entre leurs log-probabilités a posteriori. Elle fait toutefois intervenir, outre la distance $d_1^{\mathrm{SNB}}(D,D')$, un terme de normalisation globale, $ J\left|\log p(D)-\log p(D')\right|.$ Ce terme n'est pas indépendant des log-vraisemblances conditionnelles, puisque $p(D)$ est déterminée à partir des scores du modèle. L'équation~\ref{majoration_nb} ne permet pas d'attribuer totalement la proximité prédictive des observations à la seule similarité de leurs contributions discriminantes. Cela motive la construction d'une nouvelle représentation, non plus à partir des log-vraisemblances conditionnelles prises séparément, mais à partir de quantités directement liées aux contrastes entre classes, à savoir les log-odds.

In \cite{Lemaire2015}, the authors define a supervised representation induced by the naive Bayes classifier, as presented in the previous section. The upper bound given in equation~\ref{majoration_nb} relates the proximity of two observations in this representation to the differences between their log-posterior probabilities. However, in addition to the distance $d_1^{\mathrm{SNB}}(D,D')$, it involves a global normalization term, $ J|\log p(D)-$ $\log p(D')|.$ This term is not independent of the conditional log-likelihoods, since $p(D)$ is determined from the model’s scores. Equation~\ref{majoration_nb} does not allow us to attribute the predictive proximity of the observations entirely to the similarity of their discriminant contributions alone. This motivates the construction of a new representation, based no longer on the conditional log-likelihoods taken separately, but on quantities directly related to the contrasts between classes, namely the log-odds.

%%%%%%%%%%%%%%%%%%%%

%\subsection{Nouvelle représentation supervisée fondée sur les log-odds}
\subsection{A new supervised representation based on log-odds.}

% L'idée consiste à remplacer la représentation
% $
% \left(\log p(D_{m}\mid C_j)\right)_{m,j}
% $
% par une représentation fondée sur les contrastes discriminants entre une classe et l'ensemble des autres.
% %
% Soient $C=\{C_1,\dots,C_J\}$ les $J$ classes cibles. Pour chaque classe $C_j$, on définit la classe complémentaire 
% $$
% \overline{C_j}=C\setminus\{C_j\}.
% $$

The idea is to replace the representation
$
\left(\log p(D_{m}\mid C_j)\right)_{m,j}
$
with a representation based on the discriminant contrasts between a class and the set of all other classes.
Let $C=\{C_1,\dots,C_J\}$ be the $J$ target classes. For each class $C_j$, we define the complementary class $\overline{C_j}=C\setminus\{C_j\}.$

%Pour une observation $D=(D_m)_{m=1,\dots,M}$, on définit pour chaque variable $m\in\{1,\dots,M\}$ et chaque classe $j\in\{1,\dots,J\}$, la contribution locale brute au log-odds one-vs-rest\footnote{Note - Ici une hypothèse de modélisation additionnelle est faite : les variables sont conditionnellement indépendantes sachant la méta-classe ($\overline{C_j})$}. par

For an observation $D=(D_m)_{m=1,\dots,M}$, we define, for each variable $m\in\{1,\dots,M\}$ and each class $j\in\{1,\dots,J\}$, the `grow' local contribution to the one-vs-rest log-odds\footnote{Note—Here, an additional modeling assumption is made: the variables are conditionally independent given the metaclass ($\overline{C_j}$)}. by
\begin{equation}
\ell_m^{(j)}(D)=w_m \log \frac{p(D_{m}\mid C_j)}{p(D_{m}\mid \overline{C_j})},
\label{eq:ovr_log_odds_local}
\end{equation}
where
\begin{equation}
p(D_{m}\mid \overline{C_j})
=
\sum_{r\neq j} p(D_{m}\mid C_r)\,p(C_r\mid \overline{C_j}).
\label{eq:ovr_mixture}
\end{equation}

%Cette quantité mesure la contribution de la variable $m$ à la discrimination de la classe $C_j$ contre l'ensemble des autres classes. On définit alors la représentation supervisée discriminante one-vs-rest par

This quantity measures the contribution of variable $m$ to discriminating between class $C_j$ and all other classes. The one-vs-rest supervised discriminant representation is then defined as follows:
\begin{equation}
\Phi(D)
=
\bigl(\ell_m^{(j)}(D)\bigr)_{\substack{m=1,\dots,M\\ j=1,\dots,J}}
\in \mathbb{R}^{M \times J}.
\label{eq:phi_ovr}
\end{equation}

%Autrement dit, chaque observation est représentée par l'ensemble de ses contributions locales aux $J$ problèmes binaires ``une classe contre le reste''. Cette représentation conserve donc, pour chaque variable, une information discriminante détaillée classe par classe. Une distance naturelle associée à cette représentation est la distance $\ell_1$ :

In other words, each observation is represented by the set of its local contributions to the set of $J$ binary “one class versus the rest” problems. This representation therefore preserves, for each variable, detailed discriminating information on a class-by-class basis. A natural distance associated with this representation is the $\ell_1$ distance:
\begin{equation}
d^{\mathrm{1}}_{\mathrm{odds}}(D,D')
=
\sum_{j=1}^J \sum_{m=1}^M
\left|
\ell_m^{(j)}(D)-\ell_m^{(j)}(D')
\right|.
\label{eq:d_ovr_odds}
\end{equation}

%Cette distance compare deux observations à partir de l'ensemble de leurs profils explicatifs discriminants. Deux individus seront donc proches si, pour chaque classe $C_j$, les variables explicatives contribuent de manière similaire au contraste entre $C_j$ et les autres classes.

This distance compares two observations based on their sets of discriminating explanatory profiles. Two individuals will therefore be considered similar if, for each class $C_j$, the explanatory variables contribute similarly to the contrast between $C_j$ and the other classes.

%Cette formulation présente plusieurs avantages. Premièrement, elle est directement liée au caractère discriminant de la décision, puisque chaque composante mesure une contribution au log-odds d'un problème binaire one-vs-rest. Deuxièmement, elle conserve l'information directionnelle portée par chaque classe, contrairement à une agrégation préalable des contributions. Elle élimine naturellement le terme de normalisation $p(D)$ de l'équation \ref{majoration_nb}. On a à présent la majoration suivante\footnote{C'est une garantie unidirectionnelle. La preuve de cette majoration est donnée en Annexe %\cite{suplementarymaterial-github}.}:

This formulation has several advantages. Firstly, it is directly related to the discriminative nature of the decision, since each component measures a contribution to the log-odds of a binary “one-vs-rest” problem. Secondly, unlike a prior aggregation of the contributions, it preserves the directional information carried by each class. It naturally eliminates the normalization term $p(D)$ from equation \ref{majoration_nb}. We now have the following bound\footnote{This is a unidirectional guarantee. The proof of this bound is given in the Appendix.}
\begin{align}
\label{majoration_nb_2}
\Delta^1(D,D') \leq  d_{odds}^{1}(D,D')  %\leq  d_{NB}^{1}(D,D') 
%\mbox{avec }  p(D)= \prod_m p(D_{mi_m}) = \prod_m  \sum_j p(D_{mi_m}|C_j) p(C_j) \nonumber
\end{align}

\subsection{Discussion}

% La distance initialement proposée dans \cite{Lemaire2015} est cohérente avec la structure additive du classifieur naïf de Bayes, mais elle reste fondée sur les log-vraisemblances conditionnelles prises séparément. Elle ne capture donc pas de manière optimale le caractère discriminant du modèle. La distance fondée sur les log-odds présente plusieurs avantages :
% \begin{itemize}
%     \item elle élimine exactement le terme de normalisation $\log p(D)$ ;
%     \item elle est plus directement liée à la règle de décision ;
%     \item elle permet de regrouper des individus ayant des profils prédictifs similaires, même lorsque leurs vraisemblances brutes diffèrent.
% \end{itemize}

The distance initially proposed in \cite{Lemaire2015} is consistent with the additive structure of the naive Bayes classifier. However, it is still based on conditional log-likelihoods considered separately. Consequently, it does not optimally capture the model’s discriminative power. The log-odds-based distance has several advantages:
\begin{itemize}
    \item it exactly eliminates the normalization term $\log p(D)$;
    \item it is more directly related to the decision rule;
    \item it allows for the grouping of individuals with similar predictive profiles, even when their `grow' likelihoods differ.
\end{itemize}

%Elle semble donc mieux adaptée à l'objectif de segmentation supervisée poursuivi, en particulier lorsque l'on souhaite construire des groupes d'individus homogènes du point de vue de leur comportement vis-à-vis de la classe cible. L'analyse proposée ici montre qu'il est possible d'améliorer la distance supervisée proposée dans  \cite{Lemaire2015} en utilisant une représentation fondée sur les log-odds entre classes. Cette nouvelle représentation reste compatible avec la structure du classifieur naïf de Bayes, mais présente un meilleur alignement avec l'objectif discriminant du problème. Dans le cadre d'un clustering supervisé orienté vers la proximité des scores tel que suggéré dans \cite{Lemaire2015}, cette distance constitue donc une alternative  à la distance définie à partir des seules log-vraisemblances conditionnelles.

It therefore appears better suited to the supervised segmentation objective at hand, particularly when the goal is to form groups of individuals who are homogeneous in terms of their behavior with respect to the target class. The analysis presented here shows that it is possible to improve the supervised distance proposed in \cite{Lemaire2015} by using a representation based on the log-odds between classes. This new representation remains compatible with the structure of the naive Bayes classifier but is better aligned with the problem’s discriminative objective. In the context of supervised clustering focused on score proximity, as suggested in \cite{Lemaire2015}, this distance therefore constitutes an alternative to the distance defined solely based on conditional log-likelihoods.

%Les distances considérées dans cet article sont motivées non seulement par leur interprétation représentationnelle, mais aussi par leur capacité à contrôler des écarts entre sorties du classifieur. Chaque géométrie induite est reliée à une notion de proximité pertinente du point de vue prédictif. La distance supervisée initiale $d^{SNB}_1$ est justifiée par le fait qu’elle contrôle directement les écarts entre les log-probabilités a posteriori des classes. L'équation~\ref{majoration_nb} majore ces écarts par la somme de $d_1^{\mathrm{SNB}}(D,D')$ et d'un terme additionnel associé à la normalisation globale. La proximité selon $d_1^{\mathrm{SNB}}$ seule ne suffit donc pas, sur la base de cette inégalité, à garantir une faible valeur de $\Delta_1(D,D')$ si les termes de normalisation diffèrent fortement. La distance discriminante $d^{odds}_1$ bénéficie d’une justification prédictive plus forte, puisqu’elle borne directement l’écart $\Delta_1(D,D')$ entre vecteurs de scores logarithmiques de classes, sans faire intervenir le terme de normalisation global. 

The distances considered in this article are motivated not only by their representational interpretation, but also by their ability to control discrepancies between classifier outputs. Each induced geometry is linked to a notion of proximity that is relevant from a predictive standpoint. The initial supervised distance $d^{SNB}_1$ is justified by the fact that it directly controls the differences between the posterior log-probabilities of the classes. Equation~\ref{majoration_nb} bounds these differences by the sum of $d_1^{\mathrm{SNB}}(D,D')$ and an additional term associated with global normalization. Proximity based on $d_1^{\mathrm{SNB}}$ alone is therefore not sufficient, based on this inequality, to guarantee a small value of $\Delta_1(D,D')$ if the normalization terms differ significantly. The discriminant distance $d^{odds}_1$ has a stronger predictive justification, since it directly bounds the difference $\Delta_1(D,D')$ between vectors of logarithmic class scores, without involving the global normalization term.

%%%%%%%%%%%%%%%%%%%%%%%

%\section{Relation avec les valeurs de Shapley}
\section{Link to Shapley Values}
\label{sec-shap}

%Cette section établit un lien formel entre la représentation supervisée fondée sur les contributions aux log-odds et la représentation construite à partir des valeurs de Shapley analytiques du classifieur naïf de Bayes  \cite{LemaireEtAl2023}. Nous montrons que, dans le cas binaire et sous le choix de distribution de référence considéré pour les valeurs de Shapley, ces deux représentations induisent exactement la même géométrie inter-individus : elles ne diffèrent que par une translation fixe, indépendante de l’individu. En particulier, la distance ($\ell_1$) entre vecteurs de contributions aux log-odds est exactement égale à la distance ($\ell_1$) entre vecteurs de valeurs de Shapley analytiques.

This section establishes a formal link between supervised representation based on log-odds contributions and representation constructed from the analytical Shapley values of the naive Bayes classifier  \cite{LemaireEtAl2023}. We show that, in the binary case and under the chosen reference distribution for the Shapley values, these two representations induce exactly the same inter-individual geometry: they differ only by a fixed translation, independent of the individual. In particular, the ($\ell_1$) distance between vectors of log-odds contributions is exactly equal to the ($\ell_1$) distance between vectors of analytical Shapley values

%Les valeurs de Shapley ont été introduites en théorie des jeux coopératifs afin de répartir équitablement la valeur produite par une coalition entre ses joueurs \cite{Shapley1953}. Dans le contexte de l'explicabilité des modèles, elles fournissent un cadre additif pour attribuer à chaque variable une contribution à une prédiction individuelle \cite{LundbergLee2017}. Dans le cas particulier du naïf de Bayes pondéré étudié ici, cette attribution peut être calculée analytiquement à partir de la structure additive du modèle \cite{LemaireEtAl2023}.

Shapley values were introduced in cooperative game theory to distribute the value generated by a coalition among its players fairly \cite{Shapley1953}. In the context of model explainability, they provide an additive framework for attributing the contribution of each variable to an individual prediction \cite{LundbergLee2017}. In the specific case of the weighted Naive Bayes model studied here, this contribution can be calculated analytically based on the model’s additive structure \cite{LemaireEtAl2023}.

\begin{theorem}[Equivalence Between Log-Odds Distance and Shapley Va\-lue Distance]
\label{thm:equiv_shap_logodds}

%Considérons un classifieur naïf de Bayes pondéré dans le cas binaire, avec classes $C_1$ et $C_0$, et une observation $D=(D_1,\dots,D_M)$. On définit, pour chaque variable $m \in \{1,\dots,M\}$, la contribution locale brute au log-odds par

Consider a weighted naive Bayes classifier in the binary case, with classes
$C_1$ and $C_0$, and an observation $D=(D_1,\dots,D_M)$.
For each variable $m \in \{1,\dots,M\}$, we define the local contribution to the log-odds as
\begin{equation}
\ell_m(D)
=
w_m \log \frac{p(D_m \mid C_1)}{p(D_m \mid C_0)}
\label{eq:ellm}
\end{equation}

%On considère également la valeur de Shapley analytique proposée pour le naïf de Bayes :
We also consider the analytical Shapley value proposed for the Naive Bayes model:
\begin{equation}
\phi_m(D)
=
w_m
\left(
\log \frac{p(D_m \mid C_1)}{p(D_m \mid C_0)}
-
\mathbb{E}_{\mathcal{D}_m}\left[
\log \frac{p(D_m \mid C_1)}{p(D_m \mid C_0)}
\right]
\right)
\label{eq:shap_nb}
\end{equation}

%Alors, pour deux individus $D$ et $D'$, la distance $\ell_1$ entre vecteurs de valeurs de Shapley est exactement égale à la distance $\ell_1$  entre vecteurs de contributions aux log-odds :

Thus, for two individuals $D$ and $D'$, the $\ell_1$ distance between their Shapley value vectors is exactly equal to the $\ell_1$ distance between their log-odds contribution vectors:
\begin{equation}
\sum_{m=1}^M |\phi_m(D)-\phi_m(D')|
=
\sum_{m=1}^M |\ell_m(D)-\ell_m(D')|
\label{eq:equiv_dist}
\end{equation}

%Autrement dit, la distance supervisée fondée sur les log-odds coïncide exactement avec la distance $L^1$ entre les vecteurs de valeurs de Shapley analytiques.

In other words, the supervised distance based on log-odds coincides exactly with the $L^1$ distance between vectors of analytic Shapley values.
\end{theorem}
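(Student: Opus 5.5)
The proof is a direct computation. The key observation is that the Shapley value $\phi_m(D)$ in equation~(\ref{eq:shap_nb}) is the log-odds contribution $\ell_m(D)$ of equation~(\ref{eq:ellm}) shifted by a constant that does not depend on the individual. The plan is to isolate this constant, show it cancels in pairwise differences, and then sum.

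\textbf{Step 1: define the offset.} For each variable $m$, set
\[
c_m \;=\; w_m\,\mathbb{E}_{\mathcal{D}_m}\!\left[\log \frac{p(D_m\mid C_1)}{p(D_m\mid C_0)}\right].
\]
This is the weighted expected log-likelihood ratio under the reference distribution $\mathcal{D}_m$ of the recoded variable $m$ (for instance the empirical distribution of the $I_m$ intervals or groups).

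\textbf{Step 2: check that $c_m$ is a constant.} Inside the expectation, $D_m$ plays the role of a dummy integration variable ranging over the $I_m$ modalities. It is not the value taken by the observation under explanation. Since $\mathcal{D}_m$ and the conditional probabilities $p(\cdot\mid C_k)$ are fixed by the trained model, $c_m$ depends only on the model and on $m$. It is finite because the Khiops estimates of $p(D_m\mid C_k)$ are strictly positive.

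This is the only point requiring care, and I expect it to be the main obstacle, though a mild one. The notation in equation~(\ref{eq:shap_nb}) reuses the symbol $D_m$ inside the expectation, so I would rewrite that expectation with a fresh variable, say $x \sim \mathcal{D}_m$. I would also note explicitly that the same reference distribution $\mathcal{D}_m$ is used when explaining $D$ and when explaining $D'$. This is the setting of \cite{LemaireEtAl2023}, and the argument relies on it.

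\textbf{Step 3: cancel the offset in differences.} By Step 1, $\phi_m(D)=\ell_m(D)-c_m$ for every observation $D$. Since $c_m$ is the same for $D$ and $D'$, it cancels:
\[
\phi_m(D)-\phi_m(D') \;=\; \bigl(\ell_m(D)-c_m\bigr)-\bigl(\ell_m(D')-c_m\bigr) \;=\; \ell_m(D)-\ell_m(D').
\]
Taking absolute values gives $|\phi_m(D)-\phi_m(D')| = |\ell_m(D)-\ell_m(D')|$ for each $m$.

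\textbf{Step 4: sum over variables.} Summing the equality of Step 3 over $m=1,\dots,M$ yields equation~(\ref{eq:equiv_dist}).

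\textbf{Remark.} The argument actually proves more. The map $\Phi \mapsto \Phi - c$, with $c=(c_1,\dots,c_M)$, is a fixed translation of $\mathbb{R}^M$. Translations preserve every norm-induced distance, so the equality holds for $\ell_p$ distances with any $p$, not only for $\ell_1$. This is the ``fixed translation'' statement announced at the beginning of Section~\ref{sec-shap}.
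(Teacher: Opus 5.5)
Your proof is correct and follows the paper's argument essentially step for step: the paper also introduces the individual-independent offset $\mu_m$ (your $c_m$), writes $\phi_m(D)=\ell_m(D)-\mu_m$, cancels it in pairwise differences, and sums the absolute values over $m$. Your remarks on the dummy integration variable, the shared reference distribution, and the equality holding for any norm correspond to the paper's footnotes on $\mu_m$ being constant across individuals and on the choice of the $\ell_1$ norm.
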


\begin{proof}
%Pour chaque variable $m$, posons
For each variable $m$, let
\begin{equation}
\mu_m
=
w_m\,
\mathbb{E}_{\mathcal{D}_m}\left[
\log \frac{p(D_m \mid C_1)}{p(D_m \mid C_0)}
\right]
\label{eq:mum}
\end{equation}

%D'après la définition \eqref{eq:shap_nb}, on peut réécrire la valeur de Shapley sous la forme\footnote{Où l’espérance, $\mu_m$, est prise par rapport à la distribution marginale de la variable recodée ($D_m$) dans la population de référence — estimée sur l’échantillon d’apprentissage — de sorte que $\mu_m$ est constant pour tous les individus.}

According to the definition (see \eqref{eq:shap_nb}), the Shapley value can be rewritten as\footnote{Where the expectation, $\mu_m$, is taken with respect to the marginal distribution of the recoded variable ($D_m$) in the reference population — estimated from the training sample — such that $\mu_m$ is constant for all individuals.}
\begin{equation}
\phi_m(D)=\ell_m(D)-\mu_m
\label{eq:phi_ell_mu}
\end{equation}

%En effet, en remplaçant $\ell_m(D)$ par son expression donnée en \eqref{eq:ellm}, on obtient immédiatement

In fact, substituting the expression for $\ell_m(D)$  given in equation \eqref{eq:ellm} immediately yields
$$
\phi_m(D)
=
w_m \log \frac{p(D_m \mid C_1)}{p(D_m \mid C_0)}
-
w_m\,
\mathbb{E}_{\mathcal{D}_m}\left[
\log \frac{p(D_m \mid C_1)}{p(D_m \mid C_0)}
\right]
=
\ell_m(D)-\mu_m
$$

%Considérons maintenant deux individus $D$ et $D'$. Pour chaque variable $m$, on a

Now, let us consider two individuals: $D$ and $D'$. For each variable m, we have
\begin{align}
\phi_m(D)-\phi_m(D')
&=
\bigl(\ell_m(D)-\mu_m\bigr)-\bigl(\ell_m(D')-\mu_m\bigr) \nonumber\\
&=
\ell_m(D)-\ell_m(D').
\label{eq:diff_phi_diff_ell}
\end{align}

%En prenant la valeur absolue, on a
Taking the absolute value gives us:
\begin{equation}
|\phi_m(D)-\phi_m(D')|
=
|\ell_m(D)-\ell_m(D')|.
\label{eq:abs_diff}
\end{equation}

%En sommant sur toutes les variables $m=1,\dots,M$, on obtient
Summing over all the variables, $m=1,\dots,M$, we obtain
\begin{equation}
\sum_{m=1}^M |\phi_m(D)-\phi_m(D')|
=
\sum_{m=1}^M |\ell_m(D)-\ell_m(D')|.
\end{equation}

%Ceci démontre l’égalité \eqref{eq:equiv_dist}.Les vecteurs de valeurs de Shapley analytiques et les vecteurs de contributions aux log-odds ne diffèrent que d’une translation fixe, coordonnée par coordonnée, donnée par $(\mu_m)_{m=1,\dots,M}$. Or une translation ne modifie pas les distances entre individus. Par conséquent, les deux distances\footnote{La norme ($\ell_1$) est retenue dans cet article car elle permet une lecture additive des écarts entre contributions individuelles des variables mais l’égalité démontrée ci-dessus ne dépend pas spécifiquement du choix de la norme.} coïncident exactement.

This proves the equality \eqref{eq:equiv_dist}. Analytic Shapley value vectors and log-odds contribution vectors differ only by a fixed translation, coordinate by coordinate, given by $(\mu_m)_{m=1,\dots,M}$. However, a translation does not alter the distances between individuals. Consequently, the two distances\footnote{The $\ell_1$ norm is used in this article because it allows for an additive interpretation of the differences between individual contributions of the variables, but the equality proven above does not depend specifically on the choice of norm.} coincide exactly.
\end{proof}

% \begin{remarque}
% \label{rem:equiv_scope}
% L'équivalence précédente vaut dans le cas binaire, pour les valeurs de Shapley analytiques du classifieur naïf de Bayes pondéré, et pour une distance définie comme une norme $L^1$ entre vecteurs de contributions. Elle ne signifie pas que toute distance construite à partir des valeurs de Shapley coïncide automatiquement avec une distance sur les log-odds. En particulier, si l'on change de norme (par exemple $L^2$) ou si l'on agrège les valeurs de Shapley autrement, l'égalité n'est plus garantie.
% \end{remarque}

%\section{Extension au cas multiclasse}

\section{Extension to the multiclass case}
\label{sec-multi}

%Dans le cas multiclasse, il convient de ne pas retenir l’agrégation proposée dans l’article sur les valeurs de Shapley \cite{LemaireEtAl2023}, qui consiste à sommer, pour chaque variable, les valeurs absolues obtenues sur les différents problèmes binaires \emph{one-vs-rest}. En effet, une telle somme produit un scalaire par variable et fait perdre l’information directionnelle portée par chaque classe. Or l’équivalence avec la distance supervisée fondée sur les log-odds ne reste exacte que si l’on conserve une représentation vectorielle détaillée, indexée à la fois par la variable et par la classe de référence. Autrement dit, pour préserver cette équivalence, il ne faut pas sommer les contributions de Shapley entre classes avant le calcul de la distance ; il faut au contraire comparer les individus composante par composante, puis agréger les écarts obtenus sur l’ensemble des variables et des classes au moyen de la norme $\ell_1$.

In the multiclass case, the aggregation method proposed in the article on Shapley values \cite{LemaireEtAl2023}, which involves summing the absolute values obtained from the various binary \emph{one-vs-rest} problems for each variable, should not be used. Such a sum yields a scalar per variable, causing the loss of directional information carried by each class. However, equivalence with the supervised distance based on log-odds remains exact only if a detailed vector representation indexed by both the variable and the reference class is preserved. In other words, to preserve this equivalence, the Shapley contributions must not be summed across classes before calculating the distance. Instead, individuals must be compared component by component, and the differences obtained across all variables and classes must be aggregated using the $\ell_1$ norm.

%\begin{proposition}[Cas multiclasse \emph{one-vs-rest} : équivalence entre distance sur les log-odds et distance sur les valeurs de Shapley]

\begin{proposition}[Multiclass \emph{one-vs-rest} cases: equivalence between the log-odds distance and the Shapley distance]

\label{prop:ovr_shap_logodds}

%Considérons un classifieur naïf de Bayes pondéré à $J$ classes $\mathcal{C}=\{C_1,\dots,C_J\}$, et une observation $D=(D_1,\dots,D_M)$. Pour chaque classe $C_j$, on définit la classe complémentaire 

Consider a weighted naive Bayes classifier with $J$ classes
$\mathcal{C}=\{C_1,\dots,C_J\}$, and an observation $D=(D_1,\dots,D_M)$.
For each class $C_j$, we define the complementary class
$$
\overline{C_j}=\mathcal{C}\setminus\{C_j\}.
$$

%Pour chaque variable $m\in\{1,\dots,M\}$ et chaque classe $j\in\{1,\dots,J\}$, on définit la contribution locale brute au log-odds \emph{one-vs-rest} par

For each variable $m\in\{1,\dots,M\}$ and each class $j\in\{1,\dots,J\}$, we define the local contribution to the \emph{one-vs-rest} log-odds as
\begin{equation}
\ell_m^{(j)}(D)
=
w_m \log \frac{p(D_m \mid C_j)}{p(D_m \mid \overline{C_j})},
\label{eq:ell_ovr}
\end{equation}
%où
where
\begin{equation}
p(D_m \mid \overline{C_j})
=
\sum_{r\neq j} p(D_m \mid C_r)\,p(C_r \mid \overline{C_j}).
\label{eq:mix_ovr}
\end{equation}

%On définit ensuite la valeur de Shapley analytique associée à la comparaison \emph{one-vs-rest} de la classe $C_j$ contre $\overline{C_j}$ par

We then define the analytical Shapley value associated with the
\emph{one-vs-rest} comparison of class $C_j$ against $\overline{C_j}$ as
\begin{equation}
\phi_m^{(j)}(D)
=
w_m\left(
\log \frac{p(D_m \mid C_j)}{p(D_m \mid \overline{C_j})}
-
\mathbb{E}_{\widetilde{X}_m \sim p(X_m)}\left[
\log \frac{p(D_m \mid C_j)}{p(D_m \mid \overline{C_j})}
\right]
\right).
\label{eq:phi_ovr2}
\end{equation}

%Si l’on conserve toutes les composantes $\phi_m^{(j)}(D)$ dans un vecteur

If we store all the components $\phi_m^{(j)}(D)$ in a vector
\begin{equation}
\Phi_{\mathrm{Shap}}(D)
=
\bigl(\phi_m^{(j)}(D)\bigr)_{\substack{m=1,\dots,M\\ j=1,\dots,J}}
\in \mathbb{R}^{MJ},
\label{eq:vec_shap_ovr}
\end{equation}
%et si l’on définit la distance $\ell_1$ associée par
and if we define the associated $\ell_1$ norm by
\begin{equation}
d_{\mathrm{Shap}}^1(D,D')
=
\sum_{j=1}^{J}\sum_{m=1}^{M}
\left|
\phi_m^{(j)}(D)-\phi_m^{(j)}(D')
\right|,
\label{eq:d_shap_ovr}
\end{equation}
%alors cette distance coïncide exactement avec la distance \emph{one-vs-rest} fondée sur les contributions aux log-odds :
so this distance coincides exactly with the \emph{one-vs-rest} distance
based on the log-odds contributions:
\begin{equation}
d_{\mathrm{odds}}^1(D,D')
=
\sum_{j=1}^{J}\sum_{m=1}^{M}
\left|
\ell_m^{(j)}(D)-\ell_m^{(j)}(D')
\right|.
\label{eq:d_odds_ovr}
\end{equation}

%Autrement dit,
In other words
\begin{equation}
d_{\mathrm{Shap}}^1(D,D')
=
d_{\mathrm{odds}}^1(D,D').
\label{eq:egalite_ovr}
\end{equation}

\end{proposition}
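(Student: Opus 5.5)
The plan is to reduce the statement to Theorem~\ref{thm:equiv_shap_logodds}, applied componentwise to each one-vs-rest problem. The key observation is that, for each fixed pair $(m,j)$, the Shapley component is the log-odds component minus a constant.

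First, for each $m$ and $j$ define
\[
\mu_m^{(j)} = w_m\,\mathbb{E}_{\widetilde{X}_m \sim p(X_m)}\left[\log \frac{p(\widetilde{X}_m \mid C_j)}{p(\widetilde{X}_m \mid \overline{C_j})}\right].
\]
I read the expectation in \eqref{eq:phi_ovr2} as being over the reference marginal of the recoded variable, estimated on the training sample, exactly as in the footnote to the proof of Theorem~\ref{thm:equiv_shap_logodds}.

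The key point to check is that $\mu_m^{(j)}$ does not depend on the individual $D$. This is where the only real care is needed, since the notation in \eqref{eq:phi_ovr2} writes $D_m$ inside the expectation even though the integration variable is $\widetilde{X}_m$.

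I will argue that the integrand is a fixed function of the recoded value of variable $m$. This holds because the mixture $p(\cdot \mid \overline{C_j})$ in \eqref{eq:mix_ovr} uses only model quantities: the class-conditional probabilities $p(\cdot\mid C_r)$ and the weights $p(C_r\mid\overline{C_j}) = p(C_r)/\sum_{s\neq j}p(C_s)$. Integrating this function against a fixed reference distribution therefore yields a constant.

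With that in hand, the identity $\phi_m^{(j)}(D)=\ell_m^{(j)}(D)-\mu_m^{(j)}$ follows directly from \eqref{eq:ell_ovr} and \eqref{eq:phi_ovr2}. This holds for every $D$ and every pair $(m,j)$.

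Then, for two individuals $D$ and $D'$, the constants cancel in each coordinate:
\[
\phi_m^{(j)}(D)-\phi_m^{(j)}(D') = \ell_m^{(j)}(D)-\ell_m^{(j)}(D').
\]
Taking absolute values gives equality term by term. Summing over $m=1,\dots,M$ and $j=1,\dots,J$ gives $d^1_{\mathrm{Shap}}(D,D') = d^1_{\mathrm{odds}}(D,D')$ as in \eqref{eq:egalite_ovr}.

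Equivalently, $\Phi_{\mathrm{Shap}}(D) = \Phi(D) - (\mu_m^{(j)})_{m,j}$ is a fixed translation in $\mathbb{R}^{MJ}$, and every translation-invariant distance is preserved. This is just the argument of Theorem~\ref{thm:equiv_shap_logodds} run for each of the $J$ binary problems $C_j$ versus $\overline{C_j}$, where the metaclass $\overline{C_j}$ plays the role of $C_0$.

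Finally, I will note why the proof requires keeping the full $MJ$-vector. If one first aggregates $\sum_j |\phi_m^{(j)}|$ per variable, as in \cite{LemaireEtAl2023}, the constants no longer cancel inside the absolute values, so the cancellation above breaks down. This justifies the remark preceding the proposition.
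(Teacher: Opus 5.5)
Your proof is correct and follows the paper's intended argument: the paper gives no separate proof of this proposition, relying on the fixed-translation argument of Theorem~\ref{thm:equiv_shap_logodds}, which you apply coordinatewise over the $MJ$ pairs $(m,j)$ with the constants $\mu_m^{(j)}$. Your explicit check that $\mu_m^{(j)}$ does not depend on $D$ (the integrand uses only model quantities, and the expectation is taken over a fixed reference marginal) is a useful clarification of the paper's notation, which writes $D_m$ inside the expectation in \eqref{eq:phi_ovr2}.
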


\begin{remark}
\label{rem:ovr_link_article}
%{ Cette formulation est cohérente avec l’extension multiclasse proposée dans l’article sur les valeurs de Shapley  \cite{LemaireEtAl2023}, pour le naïf de Bayes, où chaque classe est successivement considérée comme classe positive et toutes les autres comme classe négative.}
This formulation is consistent with the multi-class extension proposed in the article on Shapley values  \cite{LemaireEtAl2023} for the Naive Bayes classifier, where each class is successively treated as the positive class and all others as the negative class.
\end{remark}

%\section{Expérimentations}
\section{Experiments}
\label{expe}

%Cette section a pour objectif d’évaluer empiriquement la pertinence des différentes distances introduites précédemment pour la structuration d’un voisinage local cohérent avec le comportement prédictif du classifieur. Pour cela, nous utilisons ces distances dans le cadre d'un classifieur des $k$ plus proches voisins (k-NN), classification fondée sur la structure locale du voisinage \cite{CoverHart1967}, pour plusieurs valeurs de $k$. Ce protocole permet d'évaluer directement la qualité discriminante des géométries induites par les différentes distances.

The purpose of this section is to empirically evaluate the relevance of the various distances introduced earlier for structuring a local neighborhood  consistent with the classifier’s predictive behavior. To achieve this, we use these distances within the framework of a $k$-nearest-neighbor (k-NN) classifier—a classification method based on the local neighborhood structure \cite{CoverHart1967}—for several values of $k$. This protocol allows us to directly evaluate the discriminative quality of the geometries induced by the different distances directly.

%Nous comparons trois distances. La première est la distance supervisée pondérée initiale définie à l’équation~\ref{eq:d1nb_weighted}, fondée sur les log-vraisemblances conditionnelles prises séparément mais avec le terme de normalisation global:

We compare three distances. The first is the initial weighted supervised distance defined by equation~\ref{eq:d1nb_weighted}, based on the conditional log-likelihoods taken separately but with the global normalization term:
\begin{equation}
d_{global}^{1}(D,D')=
\left[ d_{SNB}^{1}(D,D') + J \left| log(p_w(D))-log(p_w(D')) \right| \right]
\label{eq:dglobal}
\end{equation}

%La deuxième est la distance discriminante fondée sur les contributions locales aux log-odds \emph{one-vs-rest}, définie à l’équation~\ref{eq:egalite_ovr} :

The second is the discriminant distance based on the local contributions to the \emph{one-vs-rest} log-odds, defined by equation~\ref{eq:egalite_ovr}:
\begin{equation}
d_{odds}^{1}(D,D')=d_{shap}^{1}(D,D')=
\sum_{j=1}^{J}\sum_{m=1}^{M}
w_m \left|
\log \frac{p(D_m \mid C_j)}{p(D_m \mid \overline{C_j})}- \log \frac{p(D'_m \mid C_j)}{p(D'_m \mid \overline{C_j})} \right|
\label{eq:dshap}
\end{equation}

%Enfin, afin d'évaluer l'effet du terme de normalisation global, nous considérons la distance obtenue en retirant de l'équation~(\ref{eq:dglobal}) le terme $J\left|\log p_w(D)-\log p_w(D')\right|$. On retrouve alors exactement la distance supervisée pondérée définie à l'équation~(\ref{eq:d1nb_weighted}), $d_{SNB}^{1}(D,D')$, utilisée dans \cite{Lemaire2015}:

Finally, to evaluate the effect of the global normalization term,
we consider the distance obtained by removing from equation~(\ref{eq:dglobal})
the term $J |\log p_w(D) -$ $ \log p_w(D') |$. We then obtain
exactly the weighted supervised distance defined in equation~(\ref{eq:d1nb_weighted}), $d_{SNB}^{1}(D,D')$, used in \cite{Lemaire2015}:
\begin{equation}
d_{SNB}^{1}(D,D')=\sum_{m=1}^{M}\sum_{j=1}^{J} w_m \left| \log p(D_m \mid C_j) - \log p(D'_m \mid C_j) \right|.
\end{equation}

%\subsection{Jeux de données} 
\subsection{Datasets} 

%Dans ce qui suit la comparaison des algorithmes a été réalisée sur douze jeux de données différents issus du dépôt UCI \cite{Lichman2013}. Ces jeux de données ont été sélectionnés pour leur diversité en termes de nombre de classes, de variables et de nombre d'instances (voir Tableau~(\ref{used_data})\footnote{Le code fourni sur Github permet de rejouer l'ensemble des expériences sur d'autres jeux de données.}).

In what follows, the comparison of algorithms was performed on twelve different datasets from the UCI repository \cite{Lichman2013}. These datasets were selected for their diversity in terms of the number of classes, variables, and instances (see Table~(\ref{used_data})\footnote{The code provided on GitHub allows you to reproduce all of the experiments on other datasets.}).

\begin{table}
%\fontsize{8}{10}\selectfont
\centering
%\caption{{\small Jeux de données utilisés dans les expérimentations. $V_n$ désigne le nombre de variables numériques, $V_c$ le nombre de variables catégorielles, et $C$ le nombre de classes.}}
\caption{Datasets used in the experiments.
$V_n$ denotes the number of numerical variables,
$V_c$ the number of categorical variables, and $C$ the number of classes.}
\label{used_data}
\begin{tabular}{lrrrr||lrrrr}
\toprule
%\textbf{Données} & \# exemples & \textbf{$V_n$} & \textbf{$V_c$} & \textbf{$C$} & \textbf{Données} & \# exemples & \textbf{$V_n$} & \textbf{$V_c$} & \textbf{$C$}\\
\textbf{Dataset} & \# examples & \textbf{$V_n$} & \textbf{$V_c$} & \textbf{$C$} & \textbf{Dataset} & \# examples & \textbf{$V_n$} & \textbf{$V_c$} & \textbf{$C$}\\
\midrule
Adult           & 48\,842 & 7  & 8  & 2  & Pima            & 768    & 8  & 0  & 2  \\
Heart           & 270    & 10 & 3  & 2  & Segmentation    & 2\,310 & 19 & 0  & 7  \\
Letter          & 20\,000 & 16 & 0  & 26 & Shuttle         & 58\,000 & 9  & 0  & 7  \\
Mushroom        & 8\,416 & 0  & 22 & 2  & SickEuthyroid   & 3\,163 & 7  & 18 & 2  \\
Pendigits       & 10\,992 & 16 & 0  & 10 & Vehicle         & 846    & 18 & 0  & 4  \\
Phoneme         & 5\,404 & 5  & 0  & 2  & Waveform        & 5\,000 & 40 & 0  & 3  \\
\bottomrule
\end{tabular}
\end{table}

%\subsection{Implémentation des distances} 
\subsection{Implementation of distances}

%Les expérimentations ont été réalisées avec la librairie \texttt{Khiops} \cite{khiops}, qui met en œuvre une chaîne de préparation supervisée et d’apprentissage adaptée aux classifieurs probabilistes. Les variables numériques sont discrétisées automatiquement selon une approche supervisée fondée sur le critère MODL (\emph{Minimum Description Length}), qui recherche une partition en intervalles offrant un compromis bayésien entre précision descriptive et complexité du modèle \cite{Boulle2004Discretization}. Les variables catégorielles sont traitées par un groupement supervisé de modalités, également fondé sur le critère MODL : les modalités dont les distributions conditionnelles de classes sont proches peuvent être réunies dans un même groupe \cite{Boulle2005Grouping}. Le classifieur utilisé (et qui donne les poids des variables) est le fractional naïf de Bayes sélectif pondéré de Khiops tel que décrit dans \cite{HueBoulle2024FNB}. Enfin, les valeurs de Shapley sont calculées analytiquement à partir de la structure additive du naïf de Bayes pondéré tel que décrit dans \cite{LemaireEtAl2023} et disponible dans la librairie. L'ensemble des expériences est reproductible à l'aide du code disponible dans le GitHub \cite{suplementarymaterial-github}.

The experiments were conducted using the \texttt{Khiops} library \cite{khiops}, which implements a supervised preprocessing and learning pipeline tailored for probabilistic classifiers. Numeric variables were automatically discretized using a supervised approach based on the MODL (\emph{Minimum Description Length}) criterion, which seeks a partition into intervals that provides a Bayesian trade-off between descriptive accuracy and model complexity \cite{Boulle2004Discretization}. Categorical variables are processed using a supervised grouping of modalities, also based on the MODL criterion: modalities with similar conditional class distributions can be grouped together \cite{Boulle2005Grouping}. The classifier used (which assigns weights to the variables) is the Khiops weighted selective fractional Naive Bayes classifier, as described in \cite{HueBoulle2024FNB}. Finally, Shapley values are calculated analytically based on the additive structure of the weighted naive Bayes classifier, as described in \cite{LemaireEtAl2023} and implemented in the library. All experiments can be reproduced using the code available on GitHub \cite{suplementarymaterial-github}.

%\subsection{Résultats expérimentaux}
\subsection{Experimental Results}

{\bf 5.3.1 k-nn Classification performances}

%Le Tableau~(\ref{tab:results}) compare les performances du classifieur 1-ppv obtenu avec les trois distances supervisées considérées. Les premiers résultats sont présentés pour k=1 afin d’isoler au maximum l’effet propre de la géométrie induite par chacune des distances. Avec un seul plus proche voisin, la prédiction dépend directement de l’observation la plus proche dans l’espace supervisé considéré, sans effet de lissage ou de vote majoritaire entre plusieurs voisins. Ce choix permet donc d’évaluer de manière particulièrement directe la capacité des distances à rapprocher des individus ayant un comportement prédictif similaire. 

\begin{table}[!htbp]
%\fontsize{8}{9}\selectfont
\centering
\caption{Performance moyenne en Test sur 10 folds stratifiés par dataset}
\label{tab:results}
\footnotesize
\begin{tabular}{lcccccc}
\toprule
%\textbf{Dataset} &
%\multicolumn{2}{c}{\textbf{Équation 27}} &
%\multicolumn{2}{c}{\textbf{Équation 28}} &
%\multicolumn{2}{c}{\textbf{Équation 29}} \\
\textbf{Dataset} &
\multicolumn{2}{c}{$d_{global}^{1}(D,D')$} &
\multicolumn{2}{c}{$d_{shap}^{1}(D,D')$} &
\multicolumn{2}{c}{$d_{SNB}^{1}(D,D')$} \\
\cmidrule(lr){2-3}
\cmidrule(lr){4-5}
\cmidrule(lr){6-7}
& ACC & AUC & ACC & AUC & ACC & AUC \\
\midrule
Adult	&0,8239	&0,7541	&0,8241	&0,7547	&0,8256	&0,7557 \\
Heart	&0,7889	&0,7867	&0,8148	&0,8117	&0,8148	&0,8117 \\
Letter	&0,9348	&0,9659	&0,9398	&0,9686	&0,9431	&0,9703 \\
Mushroom	&1,0000	&1,0000	&1,0000	&1,0000	&1,0000	&1,0000 \\
Pendigits	&0,9731	&0,9852	&0,9773	&0,9874	&0,9758	&0,9867 \\
Phoneme	&0,8941	&0,9297	&0,8915	&0,9277	&0,8900	&0,9271 \\
Pima	&0,6756	&0,6486	&0,6730	&0,6545	&0,6717	&0,6455 \\
Segmentation	&0,9732	&0,9843	&0,9766	&0,9864	&0,9749	&0,9854 \\
Shuttle	&0,9995	    &0,9425	&0,9996	&0,9507	&0,9996	&0,9507 \\
SickEuthyroid	    &0,3639	&0,6433	&0,4145	&0,6666	&0,3636	&0,6432 \\
Vehicle	&0,6833	    &0,7898	&0,6915	&0,7950	&0,6892	&0,7937 \\
Waveform	        &0,7596	&0,8196	&0,7720	&0,8289	&0,7680	&0,8259 \\ 
\midrule
Moyenne	&0,8225	&0,8541	& \textbf{0,8312}	& \textbf{0,8610}	&0,8264	&0,8580 \\
\midrule
\end{tabular}
%\vspace{-4mm}
\end{table}

Table~(\ref{tab:results}) compares the performance of the 1-ppv classifier obtained using the three supervised distances under consideration. The initial results are presented for k=1 in order to isolate as much as possible the specific effect of the geometry induced by each of the distances. With a single nearest neighbor, the prediction depends directly on the closest observation in the supervised space under consideration, without any smoothing effect or majority voting among multiple neighbors. This choice therefore allows for a particularly direct assessment of the distances’ ability to group together individuals with similar predictive behavior.

%Sur l’ensemble des douze jeux de données, la distance $d_{shap}^{1}$, définie à l’Équation~(\ref{eq:dshap}), obtient les meilleures performances moyennes tant en ACC qu'en AUC. Ces résultats montrent que la représentation fondée sur les contributions aux log-odds permet de construire un voisinage immédiat bien aligné avec la structure discriminante du classifieur.

Across all twelve datasets, the distance $d_{shap}^{1}$, defined in Equation~(\ref{eq:dshap}), achieves the best average performance in both ACC and AUC. These results show that the representation based on log-odds contributions allows us to construct a local neighborhood that is well aligned with the discriminative structure of the classifier.

%La Figure~\ref{fig:cd} complète l'analyse des performances moyennes en comparant les rangs des trois distances sur l'AUC one-versus-rest (rangs par jeu de données, puis le test non paramétrique de Friedman et le test post-hoc de Nemenyi \cite{Demsar2006}). Contrairement aux valeurs moyennes seules, le diagramme critique permet d'évaluer la régularité relative des méthodes sur l'ensemble des jeux de données, en réduisant l'influence éventuelle de quelques jeux particulièrement favorables ou défavorables. Le diagramme place la distance $d_{shap}^{1}$ en première position, avec un rang moyen de 1.4, devant $d_{SNB}^{1}$, dont le rangmoyen est de 2, puis $d_{global}^{1}$, avec un rang moyen de 2.6.  Cependant, devant les faibles écarts de performances (ce qui ne remet pas en cause le critical diagram) dans le Tableau \ref{tab:results}, nous avons aussi réalisé des comparaisons locales deux à deux, réalisées sur les mêmes folds au moyen d’un test t corrigé pour la validation croisée \cite{NadeauBengio2003}, avec l’ajustement de Holm des comparaisons multiples  \cite{Holm1979}, qui ne permettent de distinguer aucune paire au seuil de 1 \%. Ainsi, les écarts de rang observés ne constituent pas, dans ce protocole, une évidence statistique de supériorité de ($d_1^{\mathrm{shap}}$), mais suggèrent un avantage plutôt empirique; ce qui reste intéressant.

Figure ~\ref{fig:cd} complements the analysis of average performance by comparing the ranks of the three distances on the one-versus-rest area under the curve (AUC) (ranks by dataset, followed by the non-parametric Friedman test and the Nemenyi post-hoc test, as described in \cite{Demsar2006}). Unlike mean values alone, the critical plot enables us to evaluate the consistency of methods across all datasets by minimising the influence of a few particularly favourable or unfavourable datasets. The diagram shows that $d_{shap}^{1}$ is the best method, with a mean rank of 1.4, followed by $d_{SNB}^{1}$ with a mean rank of 2 and then $d_{global}^{1}$ with a mean rank of 2.6. However, given the small performance differences in Table 1, which do not call the critical diagram into question, we also performed local pairwise comparisons on the same folds using a t-test corrected for cross-validation \cite{NadeauBengio2003} with Holm's adjustment for multiple comparisons \cite{Holm1979}. This did not allow us to distinguish any pairs at the 1\% significance level. Thus, in this protocol, the observed rank differences do not constitute statistical evidence of the superiority of $d_1^{\mathrm{shap}}$, but suggest an empirical advantage, which remains interesting.

\begin{figure}[!ht]
    \centering
    \includegraphics[width=1.0\linewidth]{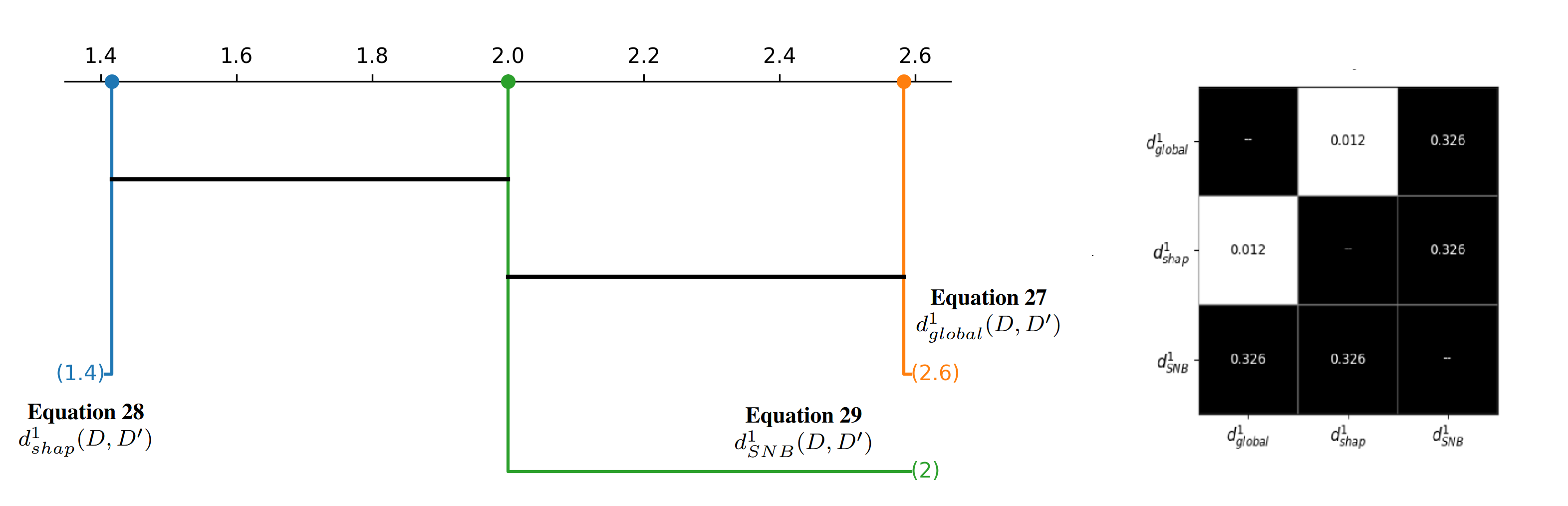}
    %\caption{{\small Critical Diagram entre les 3 distances (AUC OVR)}}
    \caption{Critical Diagram between the 3 distances (AUC OVR).}
    \label{fig:cd}
    %\vspace{-4mm}
\end{figure}

%En complément des performances de classification du $1$-NN, nous évaluons également la cohérence prédictive locale des voisinages induits par chaque distance.  Pour une observation de test $D$, soit $\widehat{c}_{\mathrm{SNB}}(D)=\arg\max_j P(C_j\mid D)$ la classe prédite par le naïf de Bayes pondéré appris sur le fold d'apprentissage. Nous définissons alors

{\bf 5.3.2 Predictive consistency\\}

In addition to the classification performance of the $1$-NN, we also evaluate the local predictive consistency of the neighborhoods induced by each distance.  For a test observation $D$, let $\widehat{c}_{\mathrm{SNB}}(D)=\arg\max_j P(C_j\mid D)$ be the class predicted by the weighted Naive Bayes classifier trained on the training fold. We then define
$$
\mathrm{LocAgree}_k(d)
=
\frac{1}{|T|}
\sum_{D\in T}
\frac{1}{k}
\sum_{D'\in N_k^d(D)}
\mathds{1}\!\left[
\widehat{c}_{\mathrm{SNB}}(D)
=
\widehat{c}_{\mathrm{SNB}}(D')
\right].
$$
%Cette quantité correspond à la proportion moyenne de voisins partageant la même classe prédite par le SNB que l'observation considérée ; une valeur élevée est donc préférable. 

This quantity corresponds to the average proportion of neighbors that share the same class predicted by the SNB as the observation in question; a high value is therefore preferable.
\begin{figure*}[!htbp]
    \centering

    \begin{subfigure}[t]{0.49\textwidth}
        \centering
        \includegraphics[width=\linewidth]{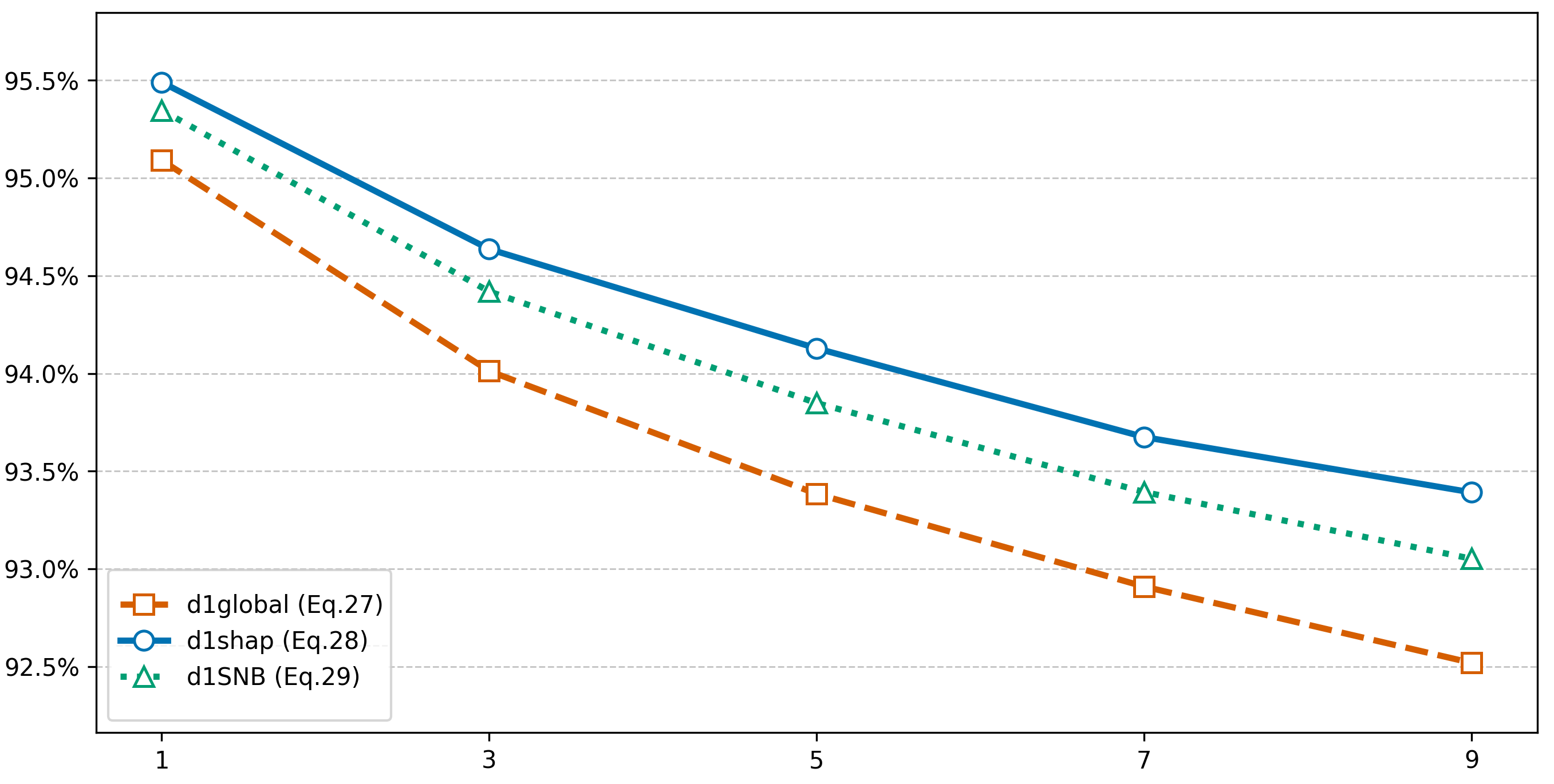}
        %\caption{Moyenne non pondérée sur les douze jeux de données.}
        \caption{Unweighted average across the twelve datasets.}
        \label{fig:local-agree-global}
    \end{subfigure}
    \hfill
    \begin{subfigure}[t]{0.49\textwidth}
        \centering
        \includegraphics[width=\linewidth]{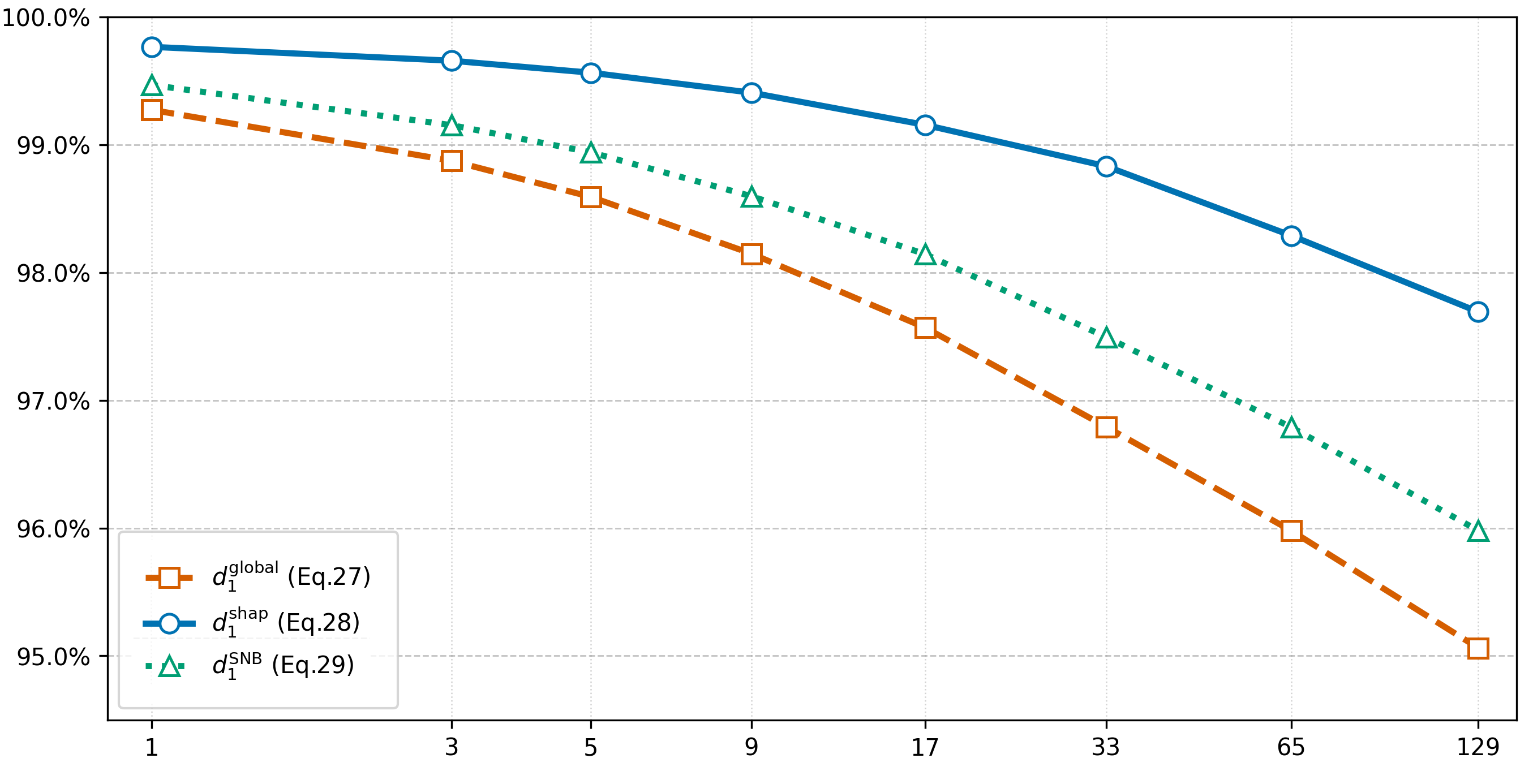}
        %\caption{Adult : moyenne sur les dix folds test, avec une abscisse en échelle logarithmique de base $2$.}
        \caption{Adult: average across the ten test folds, with the x-axis on a base-2 logarithmic scale.}
        \label{fig:local-agree-adult}
    \end{subfigure}

    %\caption{{\small Évolution de $\mathrm{LocAgree}_k$ en fonction de $k$. Une valeur élevée indique que les voisins sélectionnés par la distance considérée partagent plus fréquemment la même classe prédite par le SNB. %que l'observation de test.     La sous-figure~(\subref{fig:local-agree-global}) présente les résultats moyens sur les douze jeux de données, tandis que la sous-figure~(\subref{fig:local-agree-adult}) détaille ce comportement pour le jeu de données Adult sur un ensemble étendu de valeurs de $k$.}}
    \caption{Evolution of $\mathrm{LocAgree}_k$ as a function of $k$. A high value indicates that the neighbors selected based on the given distance more frequently share the same class predicted by the SNB than the test observation.     Subfigure~(\subref{fig:local-agree-global}) shows the average results across the twelve datasets, while subfigure~(\subref{fig:local-agree-adult}) details this behavior for the Adult dataset over a wide range of $k$ values.}
    \label{fig:local-agree-global-adult}
\end{figure*}

%La Figure~\ref{fig:local-agree-global-adult} confirme que les distances considérées induisent des voisinages globalement homogènes du point de vue de la décision du SNB, les valeurs de $\mathrm{LocAgree}_k$ restant élevées pour toutes les valeurs de $k$. La distance $d_1^{\mathrm{shap}}$ obtient les valeurs les plus élevées, ce qui indique que les voisins qu'elle sélectionne reçoivent plus fréquemment la même classe prédite que l'observation de test. Comme attendu, $\mathrm{LocAgree}_k$ tend à diminuer lorsque $k$ augmente : l'élargissement du voisinage conduit progressivement à inclure des observations plus éloignées et donc potentiellement situées de l'autre côté d'une frontière de décision. La distance construite à partir des valeurs de Shapley ne rapproche pas seulement des observations aux profils probabilistes similaires, mais tend également à préserver davantage la stabilité de la classe prédite, étant donné que $\mathrm{LocAgree}_k$ décroît moins vite pour elle que pour les autres distances.

Figure~\ref{fig:local-agree-global-adult} confirms that the considered distances result in neighborhoods that are generally homogeneous in terms of of the SNB’s decision, with the values of $\mathrm{LocAgree}_k$ remaining high for all values of $k$. The $d_1^{\mathrm{shap}}$ distance yields the highest values, indicating that the neighbors it selects are more frequently assigned the same predicted class as the test observation. As expected, $\mathrm{LocAgree}_k$ tends to decrease as $k$ increases: expanding the neighborhood gradually leads to the inclusion of more distant observations and thus those potentially located on the other side of a decision boundary. The distance constructed from the Shapley values not only brings together observations with similar probabilistic profiles but also tends to better preserve the stability of the predicted class better, since $\mathrm{LocAgree}_k$ decreases more slowly for it than for the other distances.\\

%{\bf Etude de la dispersion - }
{\bf 5.3.3 Analysis of dispersion\\}

%Pour une distance $d$ donnée et pour chaque observation de test $D$, nous notons $\mathcal{N}_k^d(D)$ l’ensemble de ses $k$ plus proches voisins dans l’échantillon d’apprentissage. Nous calculons 

For a given distance $d$ and for each test observation $D$, we denote by $\mathcal{N}_k^d(D)$ the set of its $k$ nearest neighbors in the training sample. We compute 
 \begin{equation}
 \mathrm{LocDelta}_k(d)=
 \frac{1}{|T|}
 \sum_{D\in T}
 \frac{1}{k}
 \sum_{D'\in \mathcal{N}_k^d(D)}
 \Delta_1(D,D'),
 \label{eq:locdelta}
 \end{equation}
%où $T$ désigne l’ensemble de test. Cela permet d'évaluer la cohérence prédictive locale des voisinages induits par chaque distance à l’aide d’un écart prédictif local moyen. Pour une observation test (D) et l’un de ses voisins (D'), la quantité ($\Delta_1(D,D')$) compare les vecteurs complets de log-probabilités a posteriori estimées par le FNB. Elle est nulle si et seulement si les deux observations possèdent exactement le même vecteur de probabilités prédites. Une faible valeur de ($\mathrm{LocDelta}_k$) indique donc que les voisins sélectionnés présentent des profils probabilistes proches au sens du modèle de référence. Cette mesure ne garantit toutefois pas, à elle seule, que la classe prédite soit stable ; cette propriété est examinée séparément au moyen de ($\mathrm{LocAgree}_k$).
where $T$ denotes the test set. This allows us to evaluate the local predictive consistency of the neighborhoods induced by each distance using an average local predictive deviation. For a test observation (D) and one of its neighbors (D'), the quantity ($\Delta_1(D,D')$) compares the complete vectors of posterior log-probabilities estimated by the SNB. It is zero if and only if the two observations have exactly the same vector of predicted probabilities. A low value of ($\mathrm{LocDelta}_k$) therefore indicates that the selected neighbors have similar probabilistic profiles according to the reference model. However, this measure alone does not guarantee stable predictions; this property is examined separately using ($\mathrm{LocAgree}_k$).

\begin{figure}[!h]
    \centering
    \includegraphics[width=0.55\linewidth]{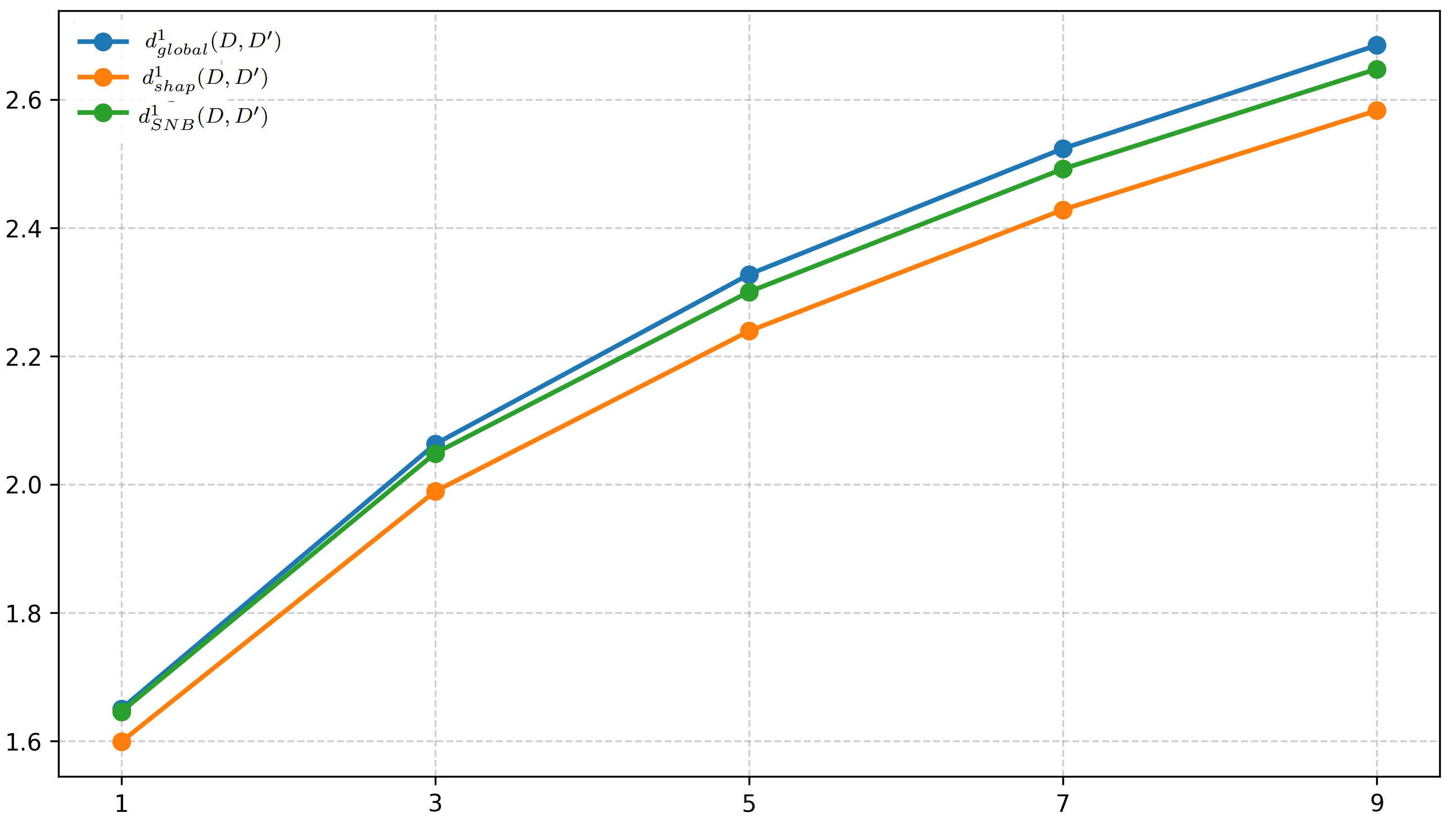}
    %\caption{Valeur des dispersion en fonction de la valeur de k (moyenne sur les 12 datasets).}
    \caption{Dispersion values as a function of k (average across the 12 datasets).}
    \label{fig:dispersion-versus-k2}
    %\vspace{-4mm}
\end{figure}

%La Figure~\ref{fig:dispersion-versus-k2} montre que la distance $d_{shap}^{1}$ induit, en moyenne, les voisinages présentant la plus faible dispersion prédictive pour toutes les valeurs de $k$ considérées. L'augmentation générale de la dispersion lorsque $k$ augmente est attendue. En élargissant le voisinage, les observations sélectionnées sont progressivement moins similaires à l'observation étudiée du point de vue de leur vecteur de scores prédictifs. L'intérêt de $d_{shap}^{1}$ est qu'elle limite plus fortement cette augmentation : à taille de voisinage égale, ses voisins ont en moyenne des log-probabilités de classes plus proches de celles de l'individu considéré.\\

Figure ~ref{fig:dispersion-versus-k2} shows that, on average, the neighbourhoods with the lowest predictive dispersion are those produced by $d_{shap}^{1}$ for all considered values of $k$. The general increase in variance as $k$ increases is to be expected. As the neighborhood expands, the selected observations become progressively less similar to the observation under study in terms of their predictive score vectors. The advantage of $d_{shap}^{1}$ is that it limits this increase more effectively: for the same neighborhood size, its neighbors have, on average, class log-probabilities closer to those of the individual under consideration.\\

%%%%%%%%%%%%%%%%%%%%%%%%%%%%%%%%
{\bf 5.3.4 Analysis of the local classification margin\\}

%Au-delà de l'accord local avec les décisions du FNB, nous évaluons la capacité de chaque distance à séparer localement les classes réelles dans le cadre du classifieur $1$-ppv. Pour chaque observation de test $D$, de classe réelle $y(D)$, nous notons $A$ l'échantillon d'apprentissage du fold considéré. Nous définissons la distance au plus proche voisin de même classe par

Beyond local agreement with the SNB’s decisions, we evaluate the
ability of each distance to locally separate the true classes within the $1$-ppv classifier framework. For each test observation $D$, of true class $y(D)$, we denote by $A$ the training sample of the fold in question. We define the distance to the nearest neighbor of the same class as
$$
d^{+}(D;d)
=
\min_{\substack{D' \in A\\ y(D')=y(D)}} d(D,D'),
$$
%et la distance au plus proche voisin de classe différente par
and the distance to the nearest neighbor of a different class by
$$
d^{-}(D;d)
=
\min_{\substack{D' \in A\\ y(D')\neq y(D)}} d(D,D').
$$

%Nous définissons alors la marge locale normalisée :
We then define the normalized local margin as follows:
$$
\mathrm{Margin}(D;d)
=
\frac{d^{-}(D;d)-d^{+}(D;d)}
     {d^{-}(D;d)+d^{+}(D;d)}.
$$

\begin{table}[!htbp]
%\fontsize{8}{9}\selectfont
\centering
%\caption{Performance moyenne en Test sur 10 folds stratifiés par dataset}
\caption{Average performance on test across 10 folds stratified by dataset}
\label{tab:margin}
\footnotesize
\begin{tabular}{lccc}
\toprule
dataset & $d_{global}^{1}(D,D')$ & $d_{shap}^{1}(D,D')$ & $d_{SNB}^{1}(D,D')$ \\
\midrule
Adult	& 0.4838 	& 0.5264 & 0.5015 \\
Heart	& 0.4626  & 0.4645 & 0.4698 \\
Letter	& 0.5931  & 0.6304 & 0.6210 \\
Mushroom & 0.9999 & 1.0000 & 1.0000 \\
Pendigits & 0.5276 & 0.5576 & 0.5443 \\
Phoneme	& 0.3073 & 0.3004 & 0.3017 \\
Pima	& 0.3057 & 0.3101 & 0.3091 \\
Segmentation & 0.7598 & 0.7948 & 0.7790 \\
Shuttle	& 0.9977 & 0.9979 & 0.9979 \\
SickEuthyroid	& 0.2703 & 0.2703 & 0.2703 \\
Vehicle	& 0.2372 & 0.2489 & 0.2550 \\
Waveform	& 0.0893 & 0.1033 & 0.0960 \\
\midrule
%Moyenne & 0.5029 & {\bf 0.5170} & 0.5121  \\
Average & 0.5029 & {\bf 0.5170} & 0.5121  \\
\midrule
\end{tabular}
\end{table}

%Cette quantité est comprise entre $-1$ et $1$. Une valeur positive signifie que le voisin le plus proche de $D$ dans l'échantillon d'apprentissage appartient à sa classe réelle. À l'inverse, une valeur négative indique qu'une observation d'une autre classe est plus proche que tout voisin de même classe. Une marge élevée traduit donc une meilleure séparation locale des classes dans la géométrie induite par la distance considérée. Les résultats sont présentés dans le tableau \ref{tab:margin}.

This quantity ranges from $-1$ to $1$. A positive value means that $D$’s nearest neighbor in the training sample belongs to its true class. Conversely, a negative value indicates that an observation from another class is closer than any neighbor from the same class. A high margin therefore reflects better local separation of classes in the geometry induced by the distance under consideration. The results are presented in Table \ref{tab:margin}.

\section{Conclusion}

%Cet article a proposé une reformulation de la distance supervisée induite par le naïf de Bayes pondéré à partir des contributions locales aux log-odds. Cette représentation est plus directement liée au caractère discriminant de la règle de décision que la représentation classique fondée sur les log-vraisemblances conditionnelles. Nous avons montré que, lorsque les contributions sont conservées classe par classe dans le cas multiclasse, la distance obtenue coïncide exactement avec la distance entre les vecteurs de valeurs de Shapley analytiques. La géométrie proposée possède ainsi une interprétation à la fois supervisée et explicative.

This article proposes a reformulation of the supervised distance induced by the weighted Naive Bayes model, which is based on local contributions to the log-odds. This representation is more closely related to the discriminative nature of the decision rule than the traditional representation based on conditional log-likelihoods. We have demonstrated that when the contributions are preserved class by class in the multiclass case, the resulting distance coincides exactly with the distance between the vectors of analytical Shapley values. The proposed geometry thus has an interpretation that is both supervised and explanatory.

%Les expérimentations sur douze jeux de données indiquent que la distance $d_1^{\mathrm{shap}}$ obtient de bonnes performances prédictives avec un classifieur $1$-ppv.  En outre, elle tend à préserver davantage l'homogénéité locale des classes prédites par le SNB, telle que mesurée par $\mathrm{LocAgree}_k$. Ces résultats suggèrent que cette distance constitue une alternative pertinente à la distance supervisée utilisée dans \citet{Lemaire2015}; ceci avec une interprétation basée sur les valeurs de Shapley. Des travaux futurs étudieront son emploi pour la segmentation supervisée et dans le cadre de la prédiction conforme.

Experiments on twelve datasets indicate that the distance $d_1^{\mathrm{shap}}$ achieves good predictive performance with a $1$-ppv classifier.  Furthermore, it tends to preserve the local homogeneity of the classes predicted by the SNB more effectively, as measured by $\mathrm{LocAgree}_k$. These results suggest that this distance is a relevant alternative to the supervised distance used in \cite{Lemaire2015}; this is achieved with an interpretation based on Shapley values. Future work will explore its use in supervised segmentation and in the context of conformal prediction.

\bibliographystyle{splncs04}
\bibliography{refs.bib}

%\section*{Annexes}
\appendix
\section{Proof Equation 10}
\label{preuve10}
%Cette représentation élimine naturellement le terme de normalisation global lorsque l'on considère les log-odds entre une classe et son complément. Sous l'hypothèse que les variables sont conditionnellement indépendantes sachant la méta-classe $\overline{C_j}$, on peut définir, pour chaque classe $C_j$, le log-odds a posteriori

This representation naturally eliminates the global term of normalization when considering the log-odds between a class and its complement. Under the assumption that the variables are conditionally independent given the metaclass $\overline{C_j}$, we can define, for each class $C_j$, the posterior log-odds

$$
r_j(D)
=
\log \frac{p_w(C_j\mid D)}
{p_w(\overline{C_j}\mid D)}.
$$

%D'après la structure additive du classifieur naïf de Bayes pondéré, ce score s'écrit
Based on the additive structure of the weighted naive Bayes classifier,
this score is expressed as
$$
r_j(D)
=
\log \frac{p(C_j)}{p(\overline{C_j})}
+
\sum_{m=1}^{M}\ell_m^{(j)}(D).
$$

%Ainsi, pour deux observations $D$ et $D'$, le terme a priori s'annule :
Thus, for two observations $D$ and $D'$, the a priori term cancels out:
$$
r_j(D)-r_j(D')
=
\sum_{m=1}^{M}
\left(
\ell_m^{(j)}(D)-\ell_m^{(j)}(D')
\right).
$$

Or,
$$
p_w(C_j\mid D)
=
\frac{1}{1+\exp(-r_j(D))}.
$$
%La fonction $h(t)=\log(1/(1+\exp(-t)))$ est $1$-Lipschitz. Par conséquent,

The function $h(t)=\log(1/(1+\exp(-t)))$ is $1$-Lipschitz. Therefore,
$$
\begin{aligned}
\left|
\log p_w(C_j\mid D)
-
\log p_w(C_j\mid D')
\right|
&\leq
\left|
r_j(D)-r_j(D')
\right| \\
&\leq
\sum_{m=1}^{M}
\left|
\ell_m^{(j)}(D)-\ell_m^{(j)}(D')
\right|.
\end{aligned}
$$

%En sommant cette inégalité sur toutes les classes, on obtient :
Summing this inequality over all classes, we get:
$$
\begin{aligned}
\Delta_1(D,D')
&=
\sum_{j=1}^{J}
\left|
\log p_w(C_j\mid D)
-
\log p_w(C_j\mid D')
\right| \\
&\leq
\sum_{j=1}^{J}\sum_{m=1}^{M}
\left|
\ell_m^{(j)}(D)-\ell_m^{(j)}(D')
\right| \\
&=
d_1^{\mathrm{odds}}(D,D').
\end{aligned}
$$

\end{document}